\documentclass[lettersize,journal,twoside]{IEEEtran}
\usepackage{amsmath, amssymb, amsfonts, wasysym}
\usepackage{graphicx, wrapfig}
\usepackage{float}
\usepackage[ruled,vlined,linesnumbered]{algorithm2e}
\usepackage{booktabs}
\usepackage[font=small]{caption}
\usepackage{xcolor,colortbl}
\usepackage[colorlinks=true, pdfborder={0 0 0}, linkcolor=magenta]{hyperref}

\usepackage{enumerate}
\usepackage{enumitem}

\usepackage{amsthm}
\usepackage{multirow}
\usepackage{mathrsfs}
\usepackage{tikz}
\usetikzlibrary{arrows.meta,positioning,shapes.geometric,fit,backgrounds}
\usepackage{pgfplots}

\newcommand{\bb}[1]{\boldsymbol{#1}}
\SetKw{And}{\textbf{and}}
\SetKw{Or}{\textbf{or}}
\SetKw{Not}{\textbf{not}}
\SetKw{Is}{\textbf{is}}
\SetKw{True}{\textbf{true}}
\SetKw{False}{\textbf{false}}
\SetKw{Return}{\textbf{return}}
\SetKw{Reached}{\textbf{reached}}
\SetKw{Advanced}{\textbf{advanced}}
\SetKw{Trapped}{\textbf{trapped}}
\SetKw{Continue}{\textbf{continue}}
\SetKw{Break}{\textbf{break}}
\SetKwRepeat{Do}{do}{while}

\usepackage{etoolbox}
\makeatletter
\patchcmd{\@algocf@start}
{-1.5em}
{0pt}
{}{}
\makeatother

\newtheorem{theorem}{Theorem}

\newtheorem{problem}{Problem}[section]
\newtheorem{definition}{Definition}[section]

\theoremstyle{definition}

\title{\LARGE \bf
	Online Generation of Collision-Free Trajectories\\
	in Dynamic Environments
}

\author{Nermin Covic$^{1}$ and Bakir Lacevic$^{1}$
\thanks{Manuscript received: February 28, 2026; 
	Revised: May 22, 2026;
	Accepted: June 28, 2026. \\
	This paper was recommended for publication by Editor Aniket Bera upon evaluation of the Associate Editor and Reviewers' comments.\\
	$^{1}$Authors are with the Faculty of Electrical Engineering, University of Sarajevo, Bosnia and Herzegovina, {\tt\small \{nermin.covic, bakir.lacevic\}@etf.unsa.ba}.\\
	Digital Object Identifier (DOI): see top of this page.}%
}

\graphicspath{{Figures/}} 

\begin{document}
\maketitle

\begin{abstract}
In this paper, we present an online method for converting an arbitrary geometric path, represented by a sequence of states, and generated by any planner (e.g., sampling-based planners such as RRT or PRM, search-based planners such as ARA*, etc.), into a kinematically feasible, jerk-limited trajectory. The method generates a sequence of quintic/quartic splines that can be discretized at a user-specified control rate and streamed to a low-level robot controller. Our approach enables real-time adaptation to environmental changes and can be re-invoked at any instant to generate a new trajectory from the robot's current state to a desired target state or sequence of states. Under a bounded-obstacle-velocity assumption, the method provides conditional stopping-safety guarantees over a finite time interval in dynamic environments, while allowing bounded geometric deviation from the original path. Kinematic constraints, including jerk limits, are explicitly considered. We validate the approach in a comparative simulation study against a competing method, demonstrating favorable behavior w.r.t. smoothness, computational time, and real-time performance, particularly with frequent target-state changes (up to $1\,\mathrm{[kHz]}$). Real-robot experiments demonstrate applicability in real-world scenarios, including scenarios with a human as an obstacle.
\end{abstract}

\vspace{-0.5cm}
\section{Introduction}
\label{Sec. Introduction}
Time-optimal path parameterization (TOPP) computes a minimum-time scaling of a collision-free path under dynamic constraints. Classical offline methods, including Bobrow's algorithm \cite{bobrow1985time}, TOPP \cite{shin1985minimum}, and TOPP-RA \cite{pham2018new}, rely on \textit{Pontryagin's maximum principle} or \textit{convex optimization} to generate globally optimal trajectories for predefined paths. 
Numerical-integration methods are fast but difficult to implement robustly, while convex-optimization approaches are more stable but heavier. However, these techniques are offline and assume full path/environment knowledge, limiting real-time adaptability.

In modern applications, manipulators must often replan \textit{on-the-fly} in dynamic environments (DEs) (e.g., human-robot collaboration, moving obstacles, or target changes), while guaranteeing kinematic feasibility (position ($\mathbb{P}$), velocity ($\mathbb{V}$), acceleration ($\mathbb{A}$), and jerk ($\mathbb{J}$) constraints) and safety (collision avoidance, joint limits, etc.). Hence, \textit{online trajectory generation} (OTG) methods are briefly surveyed. 

Several works focus on smooth, jerk-constrained trajectory generation. Quintic-polynomial methods produce non-oscillatory, near time-optimal motions with bounded computation by joining fifth-order polynomials between waypoints and designing ramp conditions \cite{macfarlane2003jerk}. Continuous-jerk online generators based on multi-segment sine jerk profiles ensure smooth transitions under $\mathbb{V}$ and $\mathbb{A}$ limits \cite{zhao2022jerk}. To address the non-convexity of third-order constraints, \cite{lee2024performance} formulates conservative inequalities and solves the resulting problem via $n$-dimensional sequential linear programming.

Real-time and sensor-reactive OTG strategies emphasize synchronization and efficiency. An S-curve-based method generates multi-DoF synchronized trajectories while minimizing either $\mathbb{V}$ or $\mathbb{A}$ peaks \cite{wang2020research}. An online polytope-algebra approach exploits the robot's full kinematic capabilities by repeatedly computing a time-optimal trapezoidal acceleration profile over the remaining path \cite{skuric2025online}. Likewise, a segment-based adaptive look-ahead feedrate scheduler using a \textit{local dynamic window} and \textit{maximum velocity curve} balances efficiency and stability \cite{xu2024segmented}. Path-accurate generation under $\mathbb{V}$, $\mathbb{A}$, and $\mathbb{J}$ constraints is addressed in \cite{lange2015path} using forward scaling and backtracking, preventing segment blending through arc-length interpolation and enabling execution at each sampling step.

Safety-aware and collision-free planning approaches extend OTG to DEs. The framework in \cite{shao2024online} combines trajectory optimization with online local replanning to generate smooth and time-efficient manipulator trajectories around unforeseen dynamic obstacles. Complementary, the fast and safe trajectory-planning approach in \cite{palleschi2021fast} preserves a specified geometric path while replanning its temporal profile online to satisfy dynamically updated safety and kinodynamic constraints. Orthogonal collocation with geometric collision modeling enables smooth, time- and jerk-optimal collision-free trajectories through prescribed waypoints \cite{wen2022path}. Projected path dynamics are combined with reach-avoid safe sets in \cite{mcgovern2024safe} to compute admissible velocity and torque profiles while satisfying state/input constraints and temporal specifications. Autonomous reachability-based manipulator trajectory design \cite{holmes2020reachable} provides formal safety guarantees through offline reachable sets and provably correct online collision constraints, with fail-safe maneuvers. A two-layer human-robot collaboration architecture adapts nominal kinodynamic trajectories to human motion while enforcing safety via velocity scaling or replanning \cite{pupa2021safety}.

Optimization-based methods are another important class of collision-free trajectory planners. CHOMP \cite{zucker2013chomp} formulates continuous trajectory optimization by iteratively improving an initial trajectory w.r.t. smoothness and obstacle-avoidance costs. TrajOpt \cite{schulman2014motion} uses sequential convex optimization and convex collision checking to generate collision-free trajectories from simple initializations. However, such planners are generally local and may depend on initial trajectory quality.

\begin{table*}[t]
	\centering
	\caption{Summary of key contributions of online trajectory generation and time parameterization methods.}
	\vspace{-0.2cm}
	\resizebox{\textwidth}{!}{%
	\begin{tabular}{@{}lllllllll@{}}
		\toprule
		Method &
		Year / Venue &
		Constraints &
		\begin{tabular}[c]{@{}l@{}}
			Online/\\
			Offline
		\end{tabular} &
		\begin{tabular}[c]{@{}l@{}}
			Envir.\\ 
			type
		\end{tabular} &
		\begin{tabular}[c]{@{}l@{}}
			Safety/\\
			Coll. aware
		\end{tabular} &
		Optimality &
		\begin{tabular}[c]{@{}l@{}}
			Robot type (Validation in simulation (S) / \\
			Validation in real-world experiments (E)) 
		\end{tabular} &
		\begin{tabular}[c]{@{}l@{}}
			Comput.\\ 
			time in $\mathrm{[s]}$
		\end{tabular}
		 \\
		\midrule
		
		\rowcolor[HTML]{EFEFEF}
		\begin{tabular}[c]{@{}l@{}}
			Bobrow's \cite{bobrow1985time},\\
			TOPP \cite{shin1985minimum}
		\end{tabular}
		&
		\begin{tabular}[c]{@{}l@{}}
			1985 / IJRR, \\
			1985 / T-AC 
		\end{tabular}
	    & 
		dynamic (torque) & 
		offline & 
		static & 
		no & 
		path-constrained, time-optimal & 
		none (numerical examples) & 
		n/a \\
		
		TrajOpt \cite{schulman2014motion} 
		&2014 / IJRR 
		&kinematic ($\mathbb{P}$, $\mathbb{V}$, $\mathbb{A}$, $\mathbb{J}$)
		&offline 
		&static 
		&yes 
		&locally optimal via sequential convex opt. 
		&Atlas humanoid robot (S), mobile robot PR2 (S\&E) 
		&$\sim 10^{-1}$ \\
		
		\rowcolor[HTML]{EFEFEF}
		TOPP-RA \cite{pham2018new} &
		2018 / T-RO & 
		kin. ($\mathbb{V}$, $\mathbb{A}$), dyn. (torque) & 
		offline & 
		static & 
		no & 
		path-constrained, time-optimal & 
		6-DoF arm (S), 50-DoF legged robot (S) & 
		$\sim 10^{-2}$ \\
		
		ARMTD \cite{holmes2020reachable} &
		2020 / RSS & 
		kinematic ($\mathbb{P}$, $\mathbb{V}$, $\mathbb{A}$) & 
		hybrid & 
		dynamic & 
		yes & 
		arbitrary user-specified cost & 
		Fetch mobile manipulator (S\&E) & 
		$\sim 10^{-1}$  \\
		
		\rowcolor[HTML]{EFEFEF}
		Pupa's \cite{pupa2021safety} &
		2021 / RA-L & 
		kinematic ($\mathbb{P}$, $\mathbb{V}$, $\mathbb{A}$) & 
		online & 
		dynamic & 
		yes & 
		max. admissible speed along the path & 
		Pilz PRBT 6-DoF arm (S\&E) & 
		$\sim 10^{-3}$ \\
		
		Ruckig \cite{berscheid2021ruckig} &
		2021 / RSS & 
		kinematic ($\mathbb{P}$, $\mathbb{V}$, $\mathbb{A}$, $\mathbb{J}$) & 
		online & 
		dynamic & 
		no & 
		time-optimal & 
		Franka Panda 7-DoF arm (S\&E) &
		$\sim 10^{-5}$  \\
		
		\rowcolor[HTML]{EFEFEF}
		Zhao's \cite{zhao2022jerk} &
		2022 / IROS & 
		kinematic ($\mathbb{V}$, $\mathbb{A}$, $\mathbb{J}$) & 
		online & 
		dynamic & 
		no & 
		time-optimal within the sinusoidal-$\mathbb{J}$ family & 
		UR3 arm (S) & 
		n/a \\
		
		CuRobo \cite{sundaralingam2023curobo} &
		2023 / ICRA & 
		kinematic ($\mathbb{P}$, $\mathbb{V}$, $\mathbb{A}$, $\mathbb{J}$) & 
		online & 
		dynamic & 
		yes & 
		minimum-$\mathbb{J}$ \& minimum-$\mathbb{A}$ (locally) & 
		UR5e (S), UR10 (S), Kinova (S), Jetson AGX (E) & 
		$\sim 10^{-2}$ \\
		
		\rowcolor[HTML]{EFEFEF}
		McGovern's \cite{mcgovern2024safe} 
		&2024 / T-RO 
		&dyn. (torque), state 
		&hybrid 
		&dynamic 
		&yes 
		&safe feasible profiles 
		&UR10 (E) 
		&$\sim 10^{-4}$ \\
		
		Skuric's \cite{skuric2025online} 
		&2025 / T-RO 
		&kinematic ($\mathbb{P}, \mathbb{V}, \mathbb{A}, \mathbb{J}$) 
		&online 
		&dynamic
		&no 
		&near time-optimal 
		&Franka Panda (S\&E)  
		&$\sim 10^{-3}$ \\
		
		\rowcolor[HTML]{EFEFEF}
		Patra's \cite{patra2025kinodynamic} &
		2025 / JMR & 
		kinodynamic ($\mathbb{P}$, $\mathbb{V}$, $\mathbb{A}$) & 
		hybrid & 
		dynamic & 
		yes & 
		control efforts in a receding-horizon (locally) & 
		mobile manipulator (S\&E) & 
		$\sim 10^{-1}$  \\
		
		\textbf{CFS45 (ours)} &
		\textbf{-- / --} & 
		\textbf{kinematic ($\mathbb{P}$, $\mathbb{V}$, $\mathbb{A}$, $\mathbb{J}$)} & 
		\textbf{online} & 
		\textbf{dynamic} & 
		\textbf{yes} & 
		\textbf{time-optimal within the parabolic-$\mathbb{J}$ family} & 
		\textbf{planar 2-DoF (S), UFACTORY xArm6 (S\&E)} & 
		\textbf{$\sim 10^{-6}$ } \\
		
		\bottomrule
	\end{tabular}
	}
	\label{tab_sota_summary}
	\vspace{-0.7cm}
\end{table*}

From a computational perspective, CuRobo \cite{sundaralingam2023curobo} uses GPU acceleration to compute collision-free, minimum-jerk trajectories in tens of $\mathrm{[ms]}$ through parallel IK, collision checking, and trajectory optimization. In contrast, \cite{patra2025kinodynamic} proposes online receding-horizon planning for multiple mobile manipulators in DEs, jointly optimizing base and arm motion under kinodynamic and collision constraints.

Ruckig \cite{berscheid2021ruckig} is an open-source OTG library for real-time, jerk-limited, time-synchronized multi-DoF trajectory generation with arbitrary initial/target states and asymmetric limits. Due to its efficiency, robustness, and adoption in MoveIt, CoppeliaSim, and Frankx, Ruckig is our primary benchmark.

Tab. \ref{tab_sota_summary} summarizes the above papers. Offline methods remain the most suitable when the full path and environment are known in advance, and the goal is globally optimized path parameterization, whereas online methods trade some optimality for reactivity. Together, these approaches push toward the goal of real-time, guaranteed-safe trajectory generation.

This paper focuses on online time-parameterization and OTG methods that extend classical ideas to dynamic, collision-critical contexts. We highlight the following contributions:
\begin{itemize}
	\item A new method for fast conversion of geometric paths to jerk-limited trajectories with safety guarantees in both static and dynamic environments.
	
	\item The proposed spline-based approach can operate with either one or more changing target waypoints. 
	
	\item The method can handle arbitrary admissible initial conditions, and non-zero final velocity and acceleration values.
\end{itemize}

The next sections are organized as follows. Sec. \ref{Sec. Problem Statement and Assumptions} introduces the problem, while Sec. \ref{Sec. Method for Spline Computation} describes the proposed method for local trajectory computation. Sec. \ref{Sec. Path-to-Trajectory Conversion} explains how geometric path can be generally converted to a corresponding trajectory. Sec. \ref{Sec. Trajectory Collision Checking} describes collision checking procedure for the candidate trajectory. Sec. \ref{Sec. Simulation Study} deals with a comprehensive simulation study in which the proposed method is compared to a state-of-the-art algorithm. Afterward, Sec. \ref{Sec. Experimental Validation} validates the novel approach on a real robot. Finally, Sec. \ref{Sec. Conclusion} brings some conclusions and future work directions.

\vspace{-0.3cm}
\section{Problem Statement and Assumptions}
\label{Sec. Problem Statement and Assumptions}
Let $\mathcal{C}$ denote the robot's $n$-dimensional \textit{configuration space}. The \textit{obstacle space} $\mathcal{C}_{\mathrm{obs}}\subseteq\mathcal{C}$ is the closed set of configurations that cause collision, either with external obstacles or by self-collision, while $\mathcal{C}_{\mathrm{free}}=\mathcal{C}\setminus\mathcal{C}_{\mathrm{obs}}$ denotes the corresponding \textit{free space}. We assume that the workspace contains a finite set of possibly overlapping convex \textit{world obstacles} $\mathcal{WO}_j$, $j\in\{1,\dots,N_{\mathrm{obs}}\}$, while non-convex obstacles are handled through convex decomposition (e.g., \cite{lien2008approximate}). 
	
The robot's \textit{start}, \textit{current}, \textit{target}, and \textit{goal configurations} are denoted by $\bb{q}_{\mathrm{start}}$, $\bb{q}_{\mathrm{curr}}$, $\bb{q}_{\mathrm{target}}$, and $\bb{q}_{\mathrm{goal}}$, respectively.
A \textit{motion trajectory} $\bb{\pi}(t)=\bb{\pi}[\bb{q}_0,\bb{q}_f]$ is a continuous mapping $\bb{\pi}:[t_0,t_f]\rightarrow\mathcal{C}$ with $\bb{\pi}(t_0)=\bb{q}_0$ and $\bb{\pi}(t_f)=\bb{q}_f$, where $t_0$ and $t_f$ denote the \textit{initial} and \textit{unknown final time}, respectively. It is considered \textit{valid} if it satisfies the \textit{constraints} $\mathcal{K}$ and remains collision-free, i.e., $\bb{\pi}(t)\in\mathcal{C}_{\mathrm{free}}(t)$ for all $t\in[t_0,t_f]$.

Other assumptions are as follows: all constraints $\mathcal{K}$ are known; robot-obstacle collision/distance query is available; a nominal path or $\bb{q}_{\mathrm{target}}$ is supplied by an upstream planner; and obstacle speeds are bounded by a prescribed value $v_{\mathrm{obs}}$ in DEs. Obstacle motion directions are not assumed to be known or predictable, and obstacle accelerations need not be bounded.

\vspace{-0.15cm}
\begin{problem}
	\label{Problem path-to-trajectory}
	Let $\bb{Q}=\{\bb{q}_1,\dots,\bb{q}_N\}$ be a nominal geometric path in $\mathcal{C}$-space, where $\bb{q}_1=\bb{q}_{\mathrm{start}}$ and $\bb{q}_N=\bb{q}_{\mathrm{goal}}$. The goal is to compute a time-parameterized trajectory $\bb{\pi} : [t_0,t_f]\mapsto \mathcal{C}$ that approximates $\bb{Q}$ while satisfying the kinematic constraints $\mathcal{K}$ for each robot's $i$-th joint, $i\in\{1,\dots,n\}$, as follows:
	\vspace{-0.2cm}
	\begin{equation} \label{eq_kin_constraints}
		|\pi_i^{(o)}(t)| \leq q_{\mathrm{m}_i}^{(o)}, \quad
		o\in\{0,1,2,3\},\quad
		\forall t\in[t_0,t_f],
		\vspace{-0.2cm}
	\end{equation}
	where $q_{\mathrm{m}_i}^{(o)}$ denotes the corresponding bound on $\mathbb{P}$, $\mathbb{V}$, $\mathbb{A}$, and $\mathbb{J}$. In addition, it must be determined whether the computed trajectory is collision-free in the current (static) realization of the environment, or whether it will remain collision-free for a certain amount of time in a dynamic environment. 
\end{problem}
\vspace{-0.15cm}

Corresponding solutions to the above-defined problem are divided into Secs. \ref{Sec. Method for Spline Computation}, \ref{Sec. Path-to-Trajectory Conversion}, and \ref{Sec. Trajectory Collision Checking}, respectively.

\vspace{-0.3cm}
\section{Method for Spline Computation}
\label{Sec. Method for Spline Computation}
This section introduces the local segment-generation method simply referred to as CFS45 (Collision-Free $4^\mathrm{th}$/$5^\mathrm{th}$ order Splines). For $i$-th joint, an $m$-th order spline is assumed as
\vspace{-0.35cm}
\begin{equation}\label{eq_spline_m_th_order}
	\pi_i(t) = \phi_i^{_{(m)}} t^m + \cdots + \phi_i^{_{(1)}} t + \phi_i^{_{(0)}}, \quad t\in[t_0,t_{f_i}],
	\vspace{-0.2cm}
\end{equation}
where $\phi_i$ are the (temporary) unknown polynomial coefficients, and $t_{f_i}$ is a final time which needs to be determined.

\vspace{-0.3cm}
\subsection{Description of the Proposed Approach}
In order to satisfy \eqref{eq_kin_constraints}, it is sufficient to choose $m=5$ within \eqref{eq_spline_m_th_order} to obtain quintic spline for the robot's $i$-th joint. Quintic splines provide the lowest-degree closed-form representation satisfying boundary conditions on $\mathbb{P}$, $\mathbb{V}$, and $\mathbb{A}$, which is well suited for efficient online replanning. In principle, alternative jerk-limited formulations could also be integrated.

Initial boundary conditions at $t_0=0$ are:
\vspace{-0.2cm}
\begin{equation}\label{eq_initial_bc}
	\pi_i(0)=\phi_i^{_{(0)}},\quad \dot{\pi}_i(0)=\phi_i^{_{(1)}},\quad \ddot{\pi}_i(0)=2\phi_i^{_{(2)}},
	\vspace{-0.2cm}
\end{equation}
which are known from the previous spline, thus they
immediately yield coefficients $\phi_i^{_{(2)}}$, $\phi_i^{_{(1)}}$, and $\phi_i^{_{(0)}}$. Combining final boundary conditions at $t=t_{f_i}$
the following can be obtained:
\vspace{-0.3cm}
\begin{equation} \label{eq_c_relation}
	\resizebox{0.9\columnwidth}{!}{$\textstyle
	\phi_i^{_{(3)}} t_{f_i}^3 + \Big(3\phi_i^{_{(2)}} - \frac{\ddot{\pi}_{f_i}}{2}\Big) t_{f_i}^2 + (6 \phi_i^{_{(1)}} + 4\dot{\pi}_{f_i}) t_{f_i} + 10(\phi_i^{_{(0)}} - \pi_{f_i}) = 0,
	$}
	\vspace{-0.1cm} 
\end{equation}
\begin{equation} \label{eq_b_formula}
	\resizebox{0.85\columnwidth}{!}{$\textstyle
	\phi_i^{_{(4)}} = \frac{1}{t_{f_i}^3}\left[-\tfrac{3}{2} \phi_i^{_{(3)}} t_{f_i}^2 + \left(-\tfrac{3}{2} \phi_i^{_{(2)}} - \tfrac{\ddot{\pi}_{f_i}}{4}\right) t_{f_i} - \phi_i^{_{(1)}} + \dot{\pi}_{f_i} \right],  
	$}
	\vspace{-0.1cm}
\end{equation}
\begin{equation} \label{eq_a_formula}
	\resizebox{0.75\columnwidth}{!}{$\textstyle
	\phi_i^{_{(5)}} = \frac{1}{20 t_{f_i}^3}\left[-12 \phi_i^{_{(4)}} t_{f_i}^2 - 6 \phi_i^{_{(3)}} t_{f_i} - 2 \phi_i^{_{(2)}} + \ddot{\pi}_{f_i} \right],
	$}
	\vspace{-0.1cm}
\end{equation}
where the final boundary conditions are expressed as:
\vspace{-0.2cm}
\begin{equation} \label{eq_final_bc}
	\pi_{f_i} = \pi_i(t_{f_i}), \quad 
	\dot{\pi}_{f_i} = \dot{\pi}_i(t_{f_i}), \quad
	\ddot{\pi}_{f_i} = \ddot{\pi}_i(t_{f_i}).
	\vspace{-0.1cm}
\end{equation}
The jerk function for $i$-th robot's joint is given as
\vspace{-0.2cm}
\begin{equation}\label{eq_jerk}
\dddot{\pi}_i(t) = 60 \phi_i^{_{(5)}} t^2 + 24 \phi_i^{_{(4)}} t + 6 \phi_i^{_{(3)}}.
\vspace{-0.15cm}
\end{equation}
Since the following holds (see \eqref{eq_kin_constraints}):
\vspace{-0.2cm}
\begin{equation}\label{eq_c_range}
	|\dddot{\pi}_i(0)| = |6 \phi_i^{_{(3)}}| \le \dddot{q}_{\mathrm{m}_i} \implies \phi_i^{_{(3)}} \in  \Big[-\frac{\dddot{q}_{\mathrm{m}_i}}{6}, \frac{\dddot{q}_{\mathrm{m}_i}}{6}\Big],
	\vspace{-0.2cm}
\end{equation}
we can pick $\phi_i^{_{(3)}}$ from such range and then compute $t_{f_i}$ from \eqref{eq_c_relation} by solving the cubic equation (either numerically or exactly using Cardano's formula). Then, for each positive real solution $t_{f_i}$, \eqref{eq_b_formula} provides $\phi_i^{_{(4)}}$, and finally \eqref{eq_a_formula} yields $\phi_i^{_{(5)}}$.


After computing all $\phi$-coefficients and the final time $t_{f_i}$, we must verify kinematic constraints by \eqref{eq_kin_constraints}. According to \eqref{eq_initial_bc}--\eqref{eq_final_bc}, all boundary conditions for $t = 0$ and $t = t_{f_i}$ are satisfied for $\mathbb{P}$, $\mathbb{V}$, and $\mathbb{A}$. However, jerk values $\dddot{\pi}_i(0)$ and $\dddot{\pi}_i(t_{f_i})$ need to be verified. Instead of checking \eqref{eq_kin_constraints} $\forall t\in(0,t_{f_i})$, it suffices to verify at candidate extremal times, i.e., to solve $\dot{\pi}_i(t)=0$, $\ddot{\pi}_i(t)=0$, $\dddot{\pi}_i(t)=0$, and $\pi_i^{_{(4)}}(t)=0$ to obtain time instances $t_{\mathrm{m},\mathrm{pos}}$, $t_{\mathrm{m},\mathrm{vel}}$, $t_{\mathrm{m},\mathrm{acc}}$ and $t_{\mathrm{m},\mathrm{jerk}}$ of maximal $\mathbb{P}$, $\mathbb{V}$, $\mathbb{A}$, and $\mathbb{J}$, respectively, and evaluate constraints there. If any constraint is violated, the guess for $\phi_i^{_{(3)}}$ must be changed (as it will be described in Subsec. \ref{Subsec. Jerk Computation}), after which $t_{f_i}$, $\phi_i^{_{(4)}}$, and $\phi_i^{_{(5)}}$ must be computed again until \eqref{eq_kin_constraints} holds. In case the synchronization of all joints is desired, the final time is chosen as $t_f = \max\{t_{f_1}, \dots, t_{f_n}\}$, upon which all coefficients $\phi_i^{_{(5)}}$, $\phi_i^{_{(4)}}$, and $\phi_i^{_{(3)}}$ are recomputed and all $\mathcal{K}$ verified. We adopt a simple baseline synchronization strategy to ensure deterministic implementation and straightforward per-joint feasibility verification, rather than aiming for an optimal coupling mechanism.

Finally, it is worth stressing that the same logic is utilized to compute quartic splines (for $m=4$ within \eqref{eq_spline_m_th_order}), which are used for the robot's emergency stopping when computing safe trajectories (see Sec. \ref{Sec. Trajectory Collision Checking}). The only difference is that the final position $\pi_i(t_{f_i})$ is free and is computed as a consequence.

\vspace{-0.2cm}
\subsection{Jerk Computation}
\label{Subsec. Jerk Computation}
\vspace{-0.05cm}

\begin{theorem}[Jerk computation]
	\label{Thm. Jerk Computation}
	Let the coefficient $c_i = \phi_i^{_{(3)}}$ and let
	$I_i^{(0)}
	=
	\left[
	c_{i,\mathrm{left}}^{(0)},
	c_{i,\mathrm{right}}^{(0)}
	\right]
	=
	\left[
	-\frac{\dddot{q}_{\mathrm{m}_i}}{6},
	\frac{\dddot{q}_{\mathrm{m}_i}}{6}
	\right]
	$
	be the admissible interval for $c_i$. Alg. \ref{Pseudocode Jerk Computation} first checks the boundary values $c_{i,\mathrm{left}}^{(0)}$ and $c_{i,\mathrm{right}}^{(0)}$ (line \ref{Alg1 line2}). If at least one boundary value yields real candidates for $t_{f_i}$ satisfying $\mathcal{K}$, the algorithm immediately returns the feasible pair $(c_i^*,t^*)$ with the shortest final time (line \ref{Alg1 line5}). Otherwise, provided that the boundary check admits the existence of real candidates but $\mathcal{K}$ is not satisfied at the boundary, the algorithm applies bisection and converges to a feasible value of $c_i$ with the precision $\Delta c_i$ (lines \ref{Alg1 line while}--\ref{Alg1 line while end}).
\end{theorem}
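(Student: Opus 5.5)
The plan is to view the whole procedure through the scalar map $c_i \mapsto \{t_{f_i}\}$ defined by the cubic \eqref{eq_c_relation}, and then to argue in two stages. The first stage handles the boundary check, which is essentially bookkeeping. The second stage, the bisection, is where a continuity/intermediate-value argument is needed. Throughout, all other boundary data in \eqref{eq_c_relation} are treated as fixed, so \eqref{eq_c_relation} reads $p(t;c_i)=c_i t^3 + \beta t^2+\gamma t+\delta=0$, with $\beta,\gamma,\delta$ independent of $c_i$.

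\emph{Boundary check.} For the endpoints $c_{i,\mathrm{left}}^{(0)}$ and $c_{i,\mathrm{right}}^{(0)}$, the algorithm:
\begin{enumerate}
\item computes all positive real roots of $p(\cdot\,;c_i)$ (Cardano, finitely many);
\item for each root, obtains $\phi_i^{(4)},\phi_i^{(5)}$ from \eqref{eq_b_formula}--\eqref{eq_a_formula};
\item checks $\mathcal{K}$ at the finitely many candidate extremal times described above.
\end{enumerate}
Checking these points suffices because each $\pi_i^{(o)}$ is a polynomial on a compact interval, so its maximum modulus is attained either at an endpoint or at a zero of $\pi_i^{(o+1)}$. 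Hence the check is exact. Taking the minimum $t_{f_i}$ over the finite set of feasible pairs gives $(c_i^*,t^*)$, and this justifies the first claim (line \ref{Alg1 line5}).

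\emph{Bisection.} Assume real positive roots exist at the boundary but $\mathcal{K}$ fails at both endpoints. I would:
\begin{enumerate}
\item Define a scalar indicator $g(c_i)$, e.g.\ the signed maximal normalized violation $\max_{o}\max_t |\pi_i^{(o)}(t)|/q_{\mathrm{m}_i}^{(o)}-1$ along a chosen root branch $t_{f_i}(c_i)$, or the sign test the algorithm uses to pick a half-interval.
\item Show that the chosen root branch is continuous in $c_i$ away from discriminant points. This follows since roots of a polynomial depend continuously on its coefficients, and the leading coefficient $c_i$ only vanishes at $c_i=0$, where the cubic degenerates to a quadratic.
\item Conclude that the coefficients given by \eqref{eq_b_formula}--\eqref{eq_a_formula} are continuous in $c_i$ (as $t_{f_i}>0$), and hence so is $g$.
\item Invoke the intermediate value theorem: the bisection invariant keeps the endpoints on opposite sides of the sign criterion, so a sign change persists on each halved interval.
\item Bound the length: after $k$ steps the interval has length $\dddot{q}_{\mathrm{m}_i}/(3\cdot 2^{k})$, so the loop (lines \ref{Alg1 line while}--\ref{Alg1 line while end}) terminates after $\lceil \log_2(\dddot{q}_{\mathrm{m}_i}/(3\Delta c_i))\rceil$ iterations.
\item Return the endpoint on the feasible side, which is then a feasible $c_i$ within $\Delta c_i$ of the transition point.
\end{enumerate}

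The main obstacle is justifying the bisection invariant, namely that the feasibility criterion changes sign monotonically (or at least once) in $c_i$. Infeasibility at both ends does not by itself give a bracket. First I would try to show that the jerk at the end, $\dddot{\pi}_i(t_{f_i})$, and the overshoot of lower-order derivatives vary monotonically with $c_i$ along the smallest positive root branch. This monotonicity comes from differentiating $p(t_{f_i}(c_i);c_i)=0$ implicitly, which gives $t_{f_i}'=-t_{f_i}^3/\partial_t p$, and then substituting into \eqref{eq_b_formula}--\eqref{eq_a_formula}. Root branches can also collide or disappear at discriminant zeros and at $c_i=0$, so the bisection must move to the half where real roots persist. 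If a global monotonicity proof fails, I would weaken the claim: the result holds under the hypothesis that the branch exists on the bracket and the violation changes sign. This appears to be exactly the proviso the statement makes ("provided that the boundary check admits the existence of real candidates").
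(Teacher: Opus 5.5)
Your boundary step and the halving count in your step 5 are exactly what the paper's proof contains. For the convergence-to-feasibility claim, the paper only asserts that the retained half ``preserves the possibility of satisfying $\mathcal{K}$.'' You correctly locate the difficulty there, but your steps 1--4 and 6 cannot justify that assertion as set up.

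\begin{itemize}
\item \emph{No initial bracket.} Under the statement's proviso, each endpoint $\pm\dddot{q}_{\mathrm{m}_i}/6$ is either candidate-free or has candidates that all violate $\mathcal{K}$. So the intermediate value theorem has nothing to work with. Your fallback hypothesis (``the violation changes sign'') is an additional assumption, not the statement's proviso.
\item \emph{Wrong model of the algorithm.} Alg.~\ref{Pseudocode Jerk Computation} is not a sign-bracketing bisection. It keeps the right half exactly when the midpoint has a real candidate violating $\mathcal{K}$, and keeps the left half otherwise, including when no real candidate exists. It returns the last feasible midpoint, or $\varnothing$, rather than an endpoint.
\end{itemize}

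What would actually have to be shown is an ordering property in $c_i$: values whose candidates violate $\mathcal{K}$ lie to the left of feasible values, which in turn lie to the left of candidate-free values. Continuity of a root branch does not give this. Indeed, no single branch spans the interval, because the leading coefficient of \eqref{eq_c_relation} vanishes at $c_i=0$.

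This ordering fails in general, so the gap cannot be closed without an extra hypothesis. Consider a rest-to-rest move with $\phi_i^{(1)}=\phi_i^{(2)}=\dot{\pi}_{f_i}=\ddot{\pi}_{f_i}=0$ and $D=\pi_{f_i}-\phi_i^{(0)}>0$.
\begin{itemize}
\item Equation \eqref{eq_c_relation} reduces to $c_i t_{f_i}^3=10D$, so positive candidates exist only for $c_i>0$.
\item The resulting spline is the minimum-jerk quintic, with peak velocity, acceleration and jerk $15D/(8t_{f_i})$, $10D/(\sqrt{3}\,t_{f_i}^2)$ and $60D/t_{f_i}^3$.
\item Hence the feasible set is $(0,c_{\max}]$, with $c_{\max}<\dddot{q}_{\mathrm{m}_i}/6$ whenever a velocity or acceleration limit is active. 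For example, $D=1$ rad with the xArm6 limits of the simulation study gives $c_{\max}\approx 47<500/6$.
\item The proviso holds, yet the first midpoint $c_i=0$ has no candidate. The algorithm therefore keeps $[-\dddot{q}_{\mathrm{m}_i}/6,0]$, where no positive root exists, and returns $\varnothing$.
\end{itemize}
For $D<0$ the ordering does hold, and there your monotonicity idea works: $t_{f_i}=(10D/c_i)^{1/3}$ increases as $c_i\uparrow 0^-$, and all peaks decrease.

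What you can rigorously prove is termination after $\lceil\log_2(\dddot{q}_{\mathrm{m}_i}/(3\Delta c_i))\rceil$ iterations, together with soundness: any returned $c_i^*$ has passed the $\mathcal{K}$ check. Convergence to a feasible value requires the ordering property to be stated as an explicit hypothesis.
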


\vspace{-0.3cm}
\begin{proof}
	If the boundary check yields a feasible candidate, the claim follows directly since Alg. \ref{Pseudocode Jerk Computation} selects the one with the shortest final time. Otherwise, the bisection step repeatedly discards one half of the current admissible interval, while the retained half preserves the possibility of satisfying $\mathcal{K}$. Hence, the interval width decreases as
	$
	c_{i,\mathrm{right}}^{(k)}
	-
	c_{i,\mathrm{left}}^{(k)}
	=
	(c_{i,\mathrm{right}}^{(0)}
	- c_{i,\mathrm{left}}^{(0)}) /	
	2^k,
	$
	so the procedure terminates after a finite number of iterations once this width becomes smaller than $\Delta c_i$ (e.g., we use $0.001 \cdot 6 \,|I_i^{(0)}|$).
\end{proof}
\vspace{-0.1cm}

\noindent\textit{Remark.} Theorem \ref{Thm. Jerk Computation} states sufficient conditions for the solution existence. Otherwise, no solution is returned, and the planner reuses the \scalebox{0.96}[1]{already-computed} trajectory from previous iterations.

\SetEndCharOfAlgoLine{}
\begin{algorithm}[t]
	\small
	\setlength{\baselineskip}{9pt}
	
	\caption{Jerk Computation}
	\label{Pseudocode Jerk Computation}
	
	\KwIn{$c_{i,\mathrm{left}}^{(0)}=-\frac{\dddot{q}_{\mathrm{m}_i}}{6}$, $c_{i,\mathrm{right}}^{(0)}=\frac{\dddot{q}_{\mathrm{m}_i}}{6}$, $\Delta c_i$}
	
	\KwOut{$c_i^*$, $t^*$}

	$c_i^*\gets \varnothing$, \quad $t^*\gets \varnothing$, \quad $k \gets 1$ \\
	
	$t_{f_i} \gets \mathtt{computeRealCandidates}(c_{i,\mathrm{left}}^{(0)},\, c_{i,\mathrm{right}}^{(0)})$ \label{Alg1 line2}\\

	\If{\emph{exists feasible $t_{f_i}$}}
	{
		\If{\emph{satisfies $\mathcal{K}$ for $c_{i,\mathrm{left}}^{(0)}$ or $c_{i,\mathrm{right}}^{(0)}$}}
		{
			$t^*,\, c_i^*\gets \mathtt{getMinimalFeasible}(t_{f_i},\, c_i)$ \label{Alg1 line5}\\
			
			\Return{$c_i^*$, $t^*$}
		}
	}
	\Else
	{
		\Return{\emph{No solution can be found!}}
	}
	
	\While{$c_{i,\mathrm{right}}^{(k-1)} - c_{i,\mathrm{left}}^{(k-1)} > \Delta c_i$} 
	{\label{Alg1 line while}
		
		$c_i^{(k)} \gets \left(c_{i,\mathrm{left}}^{(k-1)} + c_{i,\mathrm{right}}^{(k-1)}\right)/2$ \label{Alg1 line14}\\
		
		$c_{i,\mathrm{left}}^{(k)} \gets c_{i,\mathrm{left}}^{(k-1)}$, \quad $c_{i,\mathrm{right}}^{(k)} \gets c_i^{(k)}$\\
		
		$t_{f_i} \gets \mathtt{computeRealCandidates}(c_i^{(k)})$\\	
		
		\If{\emph{exists feasible $t_{f_i}$}}
		{
			\If{\emph{satisfies $\mathcal{K}$}}
			{
				$t^*,\, c_i^*\gets \mathtt{getMinimalFeasible}(t_{f_i},\, c_i^{(k)})$\\
			}
			\Else
			{
				$c_{i,\mathrm{left}}^{(k)} \gets c_i^{(k)}$, \quad $c_{i,\mathrm{right}}^{(k)} \gets c_{i,\mathrm{right}}^{(k-1)}$\\
			}
		}
		
		$k \gets k + 1$ \label{Alg1 line while end}\\
	}
	
	\Return{$c_i^*$, $t^*$}
\end{algorithm}
\setlength{\textfloatsep}{0pt}

\section{Path-to-Trajectory Conversion}
\label{Sec. Path-to-Trajectory Conversion}
\vspace{-0.1cm}
This section describes how CFS45 solves Problem \ref{Problem path-to-trajectory} by converting an arbitrary, preferably collision-free, geometric path $\bb{q}_{\mathrm{start}}\rightarrow\bb{q}_{\mathrm{goal}}$ into a time-parameterized trajectory represented as a sequence of splines $\bb{\Pi}$, where each spline is computed within Sec. \ref{Sec. Method for Spline Computation}, such that any constraint from \eqref{eq_kin_constraints} must not be violated. To facilitate the understanding of this process, we will refer to Alg. \ref{Pseudocode Path-to-Trajectory Conversion} and Fig. \ref{fig_path2trajectory} in the sequel. The geometric path $\bb{Q}: \bb{q}_{\mathrm{start}}\rightarrow \bb{q}_{\mathrm{goal}}$ can be computed by any planner (e.g., RRT-Connect \cite{kuffner2000rrt}, RGBMT* \cite{covic2023asymptotically}, etc.). 

\begin{figure*}[t]
	\centering
	\includegraphics[width=0.99\linewidth]{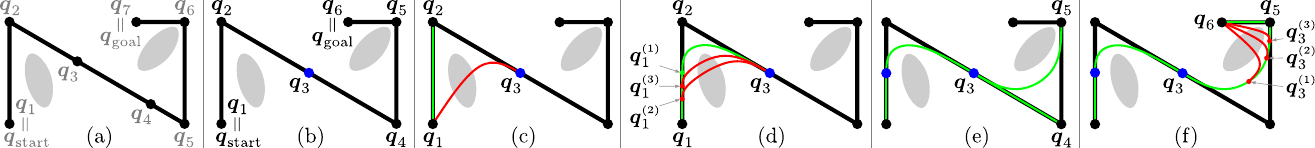}
	\vspace{-0.1cm}
	\caption{Example of the process of converting a path $\bb{Q}: \bb{q}_{\mathrm{start}}\rightarrow \bb{q}_{\mathrm{goal}}$ (black) to a corresponding trajectory (green). See the text for details.}
	\label{fig_path2trajectory}
	\vspace{-0.65cm}
\end{figure*}

First, the procedure begins with line \ref{Alg2 line1} by reallocating nodes on the path such that the Euclidean distance in $\mathcal{C}$-space between two consecutive nodes does not exceed $D_{\max}$. The goal is to reduce the number of path nodes without affecting much the path geometry (e.g., intermediate nodes $\bb{q}_3$ and $\bb{q}_4$ from a straight-line segment are replaced with a new $\bb{q}_3$, as depicted in blue in Fig. \ref{fig_path2trajectory} (a)). If CFS45 is applied to a real-time planning algorithm such as RRT$^\mathrm{X}$ \cite{otte2016rrtx} or DRGBT \cite{covic2025realtime}, a convenient choice is  $D_{\max}=\|\dot{\bb{q}}_\mathrm{m}\|\,T$, where $T$ is the \textit{planner iteration time}. This choice provides a practical trade-off between smaller values (which increase the number of generated splines and runtime) and larger values (which increase the deviation from the original geometric path).

Afterward, the sequence of splines $\bb{\Pi}$ is being computed within lines \ref{Alg2 line while}--\ref{Alg2 line end}. At each iteration, we attempt to compute a spline from $\bb{q}_k$ toward $\bb{q}_{k+2}$, $k \in \{1,\dots,N-2\}$, as an attempt to interpolate the path (if possible) around the \textit{corner point} $\bb{q}_{k+1}$. It is worth indicating that $\mathbb{V}$ and $\mathbb{A}$ at the black points in Fig. \ref{fig_path2trajectory} are always zero, while $\mathbb{V}$ at the blue points can be generally non-zero since they lie on a straight-line segment. The $\mathbb{V}$ at the blue point $\bb{q}_k$ is estimated as $\dot{\bb{q}}_k = (\bb{q}_{k+1}-\bb{q}_k)/t$, where $t$ is picked from the range $(0, T]$ in order to keep $\dot{\bb{q}}_k$ feasible (e.g., used within lines \ref{Alg2 line5} and \ref{Alg2 line10}).

First, line \ref{Alg2 line5} computes the spline $\bb{\pi}_{k,k+2} = \bb{\pi}[\bb{q}_k,\bb{q}_{k+2}]$ (e.g., the red line in Fig. \ref{fig_path2trajectory} (c) for $k=1$) and checks it for collision by line \ref{Alg2 line6}. If it is collision-free, it will be stored in $\bb{\Pi}$, and line \ref{Alg2 line5} will compute the next spline (e.g., $\bb{\pi}_{3,5}$ in Fig. \ref{fig_path2trajectory} (e)). Otherwise, line \ref{Alg2 line10} computes the spline $\bb{\pi}_{k,k+1} = \bb{\pi}[\bb{q}_k,\bb{q}_{k+1}]$ (e.g., $\bb{\pi}_{1,2}$ in Fig. \ref{fig_path2trajectory} (c)), which is always feasible since boundary $\mathbb{V}$ and $\mathbb{A}$ are satisfied, i.e., zero. 

In case $\bb{\pi}_{k,k+2}$ is in collision (e.g., $\bb{\pi}_{1,3}$ by Fig. \ref{fig_path2trajectory} (c)), line \ref{Alg2 line11} searches for the collision-free spline $\bb{\pi}[\bb{q}_k,\bb{q}_{k+2}]$, so-called \textit{interpolating spline}, that lies between $\bb{\pi}_{k,k+1}$ and $\bb{\pi}_{k,k+2}$, which is as close as possible to $\bb{\pi}_{k,k+2}$. We utilize the bisection method as follows. The first \textit{intermediate node} is selected as $\bb{q}_{k}^{(1)} = \bb{\pi}_{k,k+1}(t_{k}^{(1)})$, where $t_{k}^{(1)} = \frac{t_k+t_{k+1}}{2}$, $\bb{q}_k = \bb{\pi}_{k,k+1}(t_k)$, and $\bb{q}_{k+1} = \bb{\pi}_{k,k+1}(t_{k+1})$. Then, the spline $\bb{\pi}[\bb{q}_{k}^{(1)}, \bb{q}_{k+2}]$ is checked for collision (e.g., Fig. \ref{fig_path2trajectory} (d) for $\bb{\pi}[\bb{q}_{1_1}, \bb{q}_3]$). If it is collision-free, we seek for the second intermediate node $\bb{q}_{k}^{(2)} = \bb{\pi}_{1,2}(t_{k}^{(2)})$, where $t_{k}^{(2)} = \frac{t_k+t_{k}^{(1)}}{2}$. Otherwise, $t_{k}^{(2)} = \frac{t_{k}^{(1)}+t_{k+1}}{2}$. Hence, the procedure is repeated until the change of $t_k^{(\cdot)}$ (e.g., $\big|t_k^{(2)}-t_k^{(1)}\big|$) becomes less than a specified threshold.

\SetEndCharOfAlgoLine{}
\begin{algorithm}[t]
	\small
	\setlength{\baselineskip}{9pt}
	\caption{Path-to-Trajectory Conversion}
	\label{Pseudocode Path-to-Trajectory Conversion}
	
	\KwIn{Geometric path $\bb{Q} = \{\bb{q}_1,\dots,\bb{q}_N\}$, $D_{\max}$}
	
	\KwOut{Spline sequence $\bb{\Pi}$}
	
	$\bb{Q} \gets \mathtt{simplify\&Densify}(\bb{Q})$ \label{Alg2 line1} \tcp{s.t.\,$\|\bb{q}_{k+1}-\bb{q}_k\|\le D_{\max}$}
	
	$\bb{\Pi} \gets \varnothing$,\quad $k \gets 1$\\
	
	\While{$k < \mathtt{size}(\bb{Q})-1$}
	{\label{Alg2 line while}
		
		$\bb{\pi}_{k,k+2} \gets \mathtt{computeSpline}(\bb{q}_k,\, \bb{q}_{k+2},\, \dot{\bb{q}}_{k+2})$ \label{Alg2 line5}\\
		
		\If{\emph{$\bb{\pi}_{k,k+2}$ is collision-free}}
		{\label{Alg2 line6}
			
			$\bb{\Pi} \gets \bb{\Pi} \cup \bb{\pi}_{k,k+2}$,\quad 
			$k \gets k+2$\\
		}
		\Else
		{
			$\bb{\pi}_{k,k+1} \gets \mathtt{computeSpline}(\bb{q}_k,\, \bb{q}_{k+1},\, \dot{\bb{q}}_{k+1})$ \label{Alg2 line10}\\
			
			$\bb{\pi}_{k,k+2} \gets \mathtt{bisection}(\bb{\pi}_{k,k+1},\, \bb{q}_{k+2})$ \label{Alg2 line11}\\
			
			\If{\emph{$\bb{\pi}_{k,k+2}$ is found}}
			{
				$\bb{\Pi} \gets \bb{\Pi} \cup \bb{\pi}_{k,k+2}$, \quad				
				$k \gets k+2$\\
			}
			\Else
			{
				$\bb{\Pi} \gets \bb{\Pi} \cup \bb{\pi}_{k,k+1}$, \quad				
				$k \gets k+1$\\
			}
		}
	}
	
	\If{$k = \mathtt{size}(\bb{Q})-1$} 
	{
		$\bb{\pi}_{k,k+1} \gets \mathtt{computeSpline}(\bb{q}_k,\, \bb{q}_{k+1},\, \dot{\bb{q}}_{k+1} = \bb{0})$ \label{Alg2 line19}\\
		
		$\bb{\Pi} \gets \bb{\Pi} \cup \bb{\pi}_{k,k+1}$ \label{Alg2 line end}\\
	}
	
	\Return{$\bb{\Pi}$}
\end{algorithm}
\setlength{\textfloatsep}{0pt}

Fig. \ref{fig_path2trajectory} (e) shows the computed splines $\bb{\pi}[\bb{q}_3,\bb{q}_4]$ and $\bb{\pi}[\bb{q}_3,\bb{q}_5]$. They are both feasible, thus the second one is chosen. Since $\bb{q}_5$ is the corner point, we seek for an interpolating spline toward $\bb{q}_6$ using the bisection method, as depicted by Fig. \ref{fig_path2trajectory} (f). If no collision-free interpolating spline is found, $\bb{q}_5$ becomes an \textit{unresolved corner node}. Consequently, the robot moves to $\bb{q}_5$, stops there, and then changes direction to follow the spline $\bb{\pi}[\bb{q}_5,\bb{q}_6]$ (computed by line \ref{Alg2 line19} since $\bb{q}_6 = \bb{q}_\mathrm{goal}$ in Fig. \ref{fig_path2trajectory} (f)). Therefore, a solution always exist and the robot can successfully reach the goal. In the worst-case scenario (i.e., when the interpolation is not possible), the obtained trajectory (shown in green) will geometrically correspond to the initial path (depicted in black). Zero velocity is enforced only at unresolved corner nodes. Otherwise, intermediate nodes may be bypassed with non-zero velocity through the constructed interpolating spline.

The procedure in Alg. \ref{Pseudocode Path-to-Trajectory Conversion} represents a general framework suitable for static environments. It can also be applied in DEs if the used dynamic planner is able to timely supply a sequence of targets (waypoints). However, in dynamic settings, it is often computationally inefficient to track multiple future targets because they generally move in time according to changes in the environment. For the sake of meeting real-time constraints, it is sometimes necessary to account only for a single target $\bb{q}_\mathrm{target}$ (the immediate next one) even in case when a fully predefined path to the goal is available.

For instance, to enable smooth trajectory interpolation, DRGBT \cite{covic2025realtime} does not wait for the robot to reach $\bb{q}_\mathrm{target}$ exactly. Instead, once the robot approaches this configuration within an Euclidean distance defined by $n\cdot R\cdot \|\dot{\bb{q}}_\mathrm{curr}\| / \|\dot{\bb{q}}_\mathrm{m}\|$, where $R$ is a user-defined parameter, the target is considered reached, thus DRGBT is allowed to generate a new target. Thus, in the next iteration, the new trajectory may interpolate around the previous target, rather than enforcing a full stop at that configuration (see, e.g., the interpolation around $\bb{q}_4$ in Fig. \ref{fig_path2trajectory} (e)). Generally, the target may be updated at each iteration if a dynamic planner decides that such a change is necessary.

\begin{figure*}[t]
	\centering
	\includegraphics[width=0.99\linewidth]{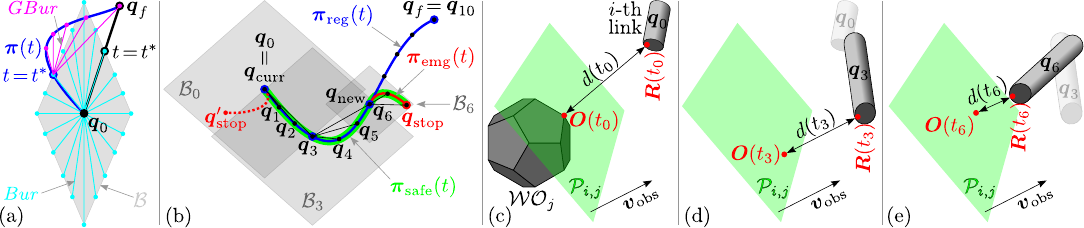}
	\vspace{-0.15cm}
	\caption{(a) Bubble $\color{gray}\mathcal{B}$ and bur $\color{cyan}Bur$ at the configuration $\bb{q}_0$ with the spine $\overline{\bb{q}_0 \bb{q}_f}$ and a local trajectory $\color{blue}\bb{\pi}(t)$. Generalized bur $\color{magenta}GBur$ originates in a configuration when $\color{blue}t=t^*$;
		(b) Process of computing $GBur$ (black lines); 
		(c,d,e) The nearest points, $\color{red}\bb{R}(t) \equiv \bb{R}_{i,j}(t)$ and $\color{red}\bb{O}(t) \equiv \bb{O}_{i,j}(t)$, from $i$-th robot's link and $\mathcal{WO}_{j}$, with the corresponding plane $\color{green}\mathcal{P}_{i,j}\equiv \mathcal{P}_{i,j}(t)$ and the distance $d(t)\equiv d_{i,j}(t)$ after $t\in\{t_0,t_3,t_6\}$.}
	\label{fig_computing_gbur}
	\vspace{-0.65cm}
\end{figure*}

\vspace{-0.3cm}
\section{Trajectory Collision Checking} 	
\label{Sec. Trajectory Collision Checking}
This section addresses Problem \ref{Problem path-to-trajectory} by explaining how a single spline from the sequence $\bb{\Pi}$ can be checked for collision. For this purpose, we utilize bubbles of free $\mathcal{C}$-space that are briefly recalled at first. Afterward, their use for generating both regular and safe trajectories is discussed.

\vspace{-0.4cm}
\subsection{Bubbles and Burs of Free $\mathcal{C}$-space}
\label{Subsec. Bubbles and Burs of Free C-space}
The concept of \textit{bubbles of free $\mathcal{C}$-space} was introduced in \cite{quinlan1994real} to define collision-free volumes around a configuration using a single workspace distance measurement. 

\vspace{-0.15cm}
\begin{definition}
	For a manipulator with $n$ revolute joints and a minimal robot-obstacle distance $d_c$, a bubble is defined as
	\vspace{-0.2cm}
	\begin{equation}\label{eq_bubble}
		\textstyle \mathcal{B}(\bb{q}_0, d_c) =
		\big\{ \bb{q} \ \big|\ \sum_{i=1}^n r_i |q_i - q_{0_i}| \le d_c \big\},
		\vspace{-0.2cm}
	\end{equation}
	where $r_i$ is the enclosing radius of a cylindrical volume aligned with the $i$-th joint axis and containing all subsequent links. The term $\sum_{i=1}^n r_i |q_i-q_{0_i}|$ upper-bounds the displacement of any point on the robot during the transition from $\bb{q}_0$ to $\bb{q}$. Hence, any $\bb{q}\in\mathcal{B}(\bb{q}_0,d_c)$ can be reached without collision.
\end{definition}
\vspace{-0.15cm}

\SetEndCharOfAlgoLine{}
\begin{algorithm}[t]
	\small
	\setlength{\baselineskip}{9pt}
	
	\SetAlgorithmName{Algorithm}{}
	\Indp
	\caption{Bur -- $\mathtt{computeBur}$}
	\label{Pseudocode Bur}
	\small
	
	\KwIn{$\bb{\pi}_\mathrm{reg}(t) = \bb{\pi}[\bb{q}_0,\bb{q}_f]$, $\Delta t$, $\bb{d}(t_0)$}
	
	\KwOut{$Bur$} 
	
	$\bb{Q},\, Bur \gets \varnothing$, \quad
	$k \gets 0$ \label{Alg3 begin} \\
	
	\While{$k\Delta t < \mathtt{getTime}(\bb{q}_f)$}
	{
		$\bb{Q} \gets [\bb{Q},\, \bb{\pi}_\mathrm{reg}(k\Delta t)]$, \quad		
		$k \gets k+1$ \\
	}
	
	$\bb{Q} \gets [\bb{Q},\, \bb{q}_f]$ \label{Alg3 line6} \\
	
	$\mathcal{B}_0 \gets \mathcal{B}(\bb{q}_0,\, \bb{d}(t_0))$ \label{Alg3 bubble} \\
	
	\For{$k = 1 : \mathtt{size}(\bb{Q})$}
	{\label{Alg3 for}
		
		\If{$\bb{q}_k \in \mathcal{B}_0$}
		{
			$Bur \gets [Bur,\, \overline{\bb{q}_0\bb{q}_k}]$ \\
		}
		\Else
		{
			\Return $Bur$ \label{Alg3 end} \\
		}
		
	}
\end{algorithm}
\setlength{\textfloatsep}{0pt}

Fig. \ref{fig_computing_gbur} (a) depicts a typical diamond-shaped bubble for a 2-DoF robot. We define a \textit{spine} as a portion of the ray emanating from $\bb{q}_0$ toward a remote configuration $\bb{q}_f$ that lies within $\mathcal{B}(\bb{q}_0,d_c)$. Specifically, for $\bb{q}(t)=\bb{q}_0+t(\bb{q}_f-\bb{q}_0)$, the corresponding spine is $\overline{\bb{q}_0\bb{q}(t^*)}$, where the maximal admissible value is determined by 
$t^* = \frac{d_c}{\sum_{i=1}^n r_i |q_{f_i}-q_{0_i}|}.$ 

In case of a nonlinear trajectory $\bb{\pi}(t)=\bb{\pi}[\bb{q}_0,\bb{q}_f]$, it's intersection with the bubble's border can be obtained from
$\sum_{i=1}^n r_i |\pi_i(t)-q_{0_i}| = d_c.$
Since multiple solutions may exist, the smallest positive one, $t^*$, determines the collision-free portion of $\bb{\pi}(t)$. To avoid costly exact computation, an efficient approximation is introduced in the sequel.

A collection of spines emanating from a single configuration is called a $Bur$. Furthermore, a \textit{generalized bur} ($GBur$) \cite{lacevic2020gbur} is constructed by further concatenation of spines along a candidate trajectory (see Fig. \ref{fig_computing_gbur} (a)). 

CFS45 method distinguishes two trajectory types:
\begin{itemize}
	\item \textit{Regular trajectory} (see Subsec. \ref{Subsec. Regular Trajectories}) -- a nominal sequence of quintic splines generated toward the current target waypoint, without prior safety certification. Hence, a collision may theoretically occur while the robot is still moving, corresponding to a type I collision;
	
	\item \textit{Safe trajectory} (see Subsec. \ref{Subsec. Safe Trajectories}) -- an executable trajectory obtained after safety verification by constraining the motion to lie inside a chain of connected \textit{dynamic expanded bubbles of free $\mathcal{C}$-space} ($\mathrm{DEB}$s) and appending a quartic emergency stopping spline to a part of quintic spline. Thus, any collision, should it occur, is constrained to happen only once the robot has stopped, corresponding to a type II collision (see \cite{covic2025realtime} for more details).
\end{itemize}

\vspace{-0.3cm}
\subsection{Regular Trajectories}
\label{Subsec. Regular Trajectories}
Suppose that a spline is given as $\bb{\pi}_\mathrm{reg}(t) = \bb{\pi}[\bb{q}_0,\bb{q}_f]$ for $t\in[t_0,t_f]$, and is called a \textit{local regular trajectory} (e.g., the blue spline in Fig. \ref{fig_computing_gbur} (b)). The proposed procedure given in Alg. \ref{Pseudocode Bur} involves the following steps:

\textbf{Step 1} (lines \ref{Alg3 begin}--\ref{Alg3 line6}): Discretize the spline $\bb{\pi}_\mathrm{reg}(t)$ with a step $\Delta t$ to obtain \textit{intermediate configurations} $\bb{q}_k=\bb{\pi}_\mathrm{reg}(k\Delta t)$, $k=\{1,\dots,N\}$ (e.g., nodes $\bb{q}_1,\dots, \bb{q}_{10}$ in Fig. \ref{fig_computing_gbur} (b)).

\textbf{Step 2} (line \ref{Alg3 bubble}): Compute the bubble $\mathcal{B}(\bb{q}_0, \bb{d}(t_0))\equiv \mathcal{B}_0$ at the root $\bb{q}_0$ using the vector of minimal distances $\bb{d}=\bb{d}(t_0) = [d_1,\dots, d_n]^T$ for each robot's link as proposed in \cite{ademovic2016path}. 

\textbf{Step 3} (lines \ref{Alg3 for}--\ref{Alg3 end}): Simply check which intermediate nodes lie within $\mathcal{B}_0$. If \eqref{eq_bubble} is satisfied, the spine $\overline{\bb{q}_0 \bb{q}_k}$ is collision-free for $t\in[t_0,t_f]$, i.e., $\overline{\bb{q}_0 \bb{q}_k} \in \mathcal{B}_0$, (e.g., spines $\overline{\bb{q}_0 \bb{q}_1},\dots, \overline{\bb{q}_0 \bb{q}_3}$ from Fig. \ref{fig_computing_gbur} (b)). Otherwise, it implies that all nodes $\bb{q}_p \notin \mathcal{B}_0$, $\forall p \in\{k,\dots,N\}$, (e.g., nodes $\bb{q}_4, \dots, \bb{q}_{10}$ in Fig. \ref{fig_computing_gbur} (b)). The collection of spines $\overline{\bb{q}_0 \bb{q}_k} \in \mathcal{B}_0$ comprise a single $Bur$. The described procedure yields an approximate $t^* \approx t_{k-1}$ (see Fig. \ref{fig_computing_gbur} (a)) (e.g., $t^* \approx t_3$ in Fig. \ref{fig_computing_gbur} (b)).

\SetEndCharOfAlgoLine{}
\begin{algorithm}[t]
	\footnotesize
	\setlength{\baselineskip}{0pt}
	
	\SetAlgorithmName{Algorithm}{}
	\Indp
	\caption{Generalized bur -- $\mathtt{computeGBur}$}
	\label{Pseudocode generalized bur}
	\small
	
	\KwIn{$\bb{\pi}_\mathrm{reg}(t) = \bb{\pi}[\bb{q}_0,\bb{q}_f]$, $\Delta t$, $\bb{d}(t_0)$} 
	
	\KwOut{$GBur$}
	
	$GBur \gets \varnothing$,\quad 
	$k\,\gets 0$ \\
	
	\While{$\bb{q}_k \neq \bb{q}_f$}
	{
		$Bur_k \gets \mathtt{computeBur}(\bb{\pi}[\bb{q}_k, \bb{q}_f],\, \Delta t,\, \bb{d}(t_{k}))$ \label{Alg4 computeBur} \\
		
		\If{$Bur_k \neq \varnothing$}
		{\label{Alg4 Q* empty check}
			
			$\overline{\bb{q}_k \bb{q}_m} \gets Bur_k(\textbf{end})$ \label{Alg4 q_m} \\
			
			$t_m \gets \mathtt{getTime}(\bb{q}_m)$ \label{Alg4 getTime} \\
			
			$\bb{d}(t_m) \gets \mathtt{getDistancesToPlanes}(\bb{q}_m,\, \bb{\mathcal{P}}(t_m))$ \label{Alg4 getDistancesToPlanes} \\
			
			$GBur \gets [GBur,\, Bur_k]$, \quad
			$k \gets m$ \label{Alg4 update Q} \\
		}
		\Else
		{
			\Return $GBur$
		}
	}
\end{algorithm}
\setlength{\textfloatsep}{0pt}

\begin{figure*}[t]
	\centering
	\resizebox{0.95\textwidth}{!}{%
\begin{tikzpicture}[
	node distance=2mm,
	>=Latex,
	font=\scriptsize,
	boxs/.style={
		rectangle,
		rounded corners=1.2pt,
		draw,
		align=center,
		inner xsep=2.2pt,
		inner ysep=2.2pt,
		minimum height=9mm
	},
	boxm/.style={
		boxs,
		text width=1.5cm,
		minimum height=9mm
	},
	boxl/.style={
		boxs,
		text width=2cm,
		minimum height=11.5mm
	},
	line/.style={
		draw,
		->,
		ultra thin
	}
	]
	
	\node[boxm, fill=gray!20] (input)
	{$\bb{q}_{\mathrm{curr}}$,\\[0.2mm] 
	$\mathcal{WO}$, $\bb{d}$, $\bb{\mathcal{P}}$};
	
	\node[boxm, fill=gray!20, right=of input] (target)
	{Select/update\\[0.3mm] 
	$\bb{q}_{\mathrm{target}}$};
	
	\node[boxm, fill=blue!20, right=of target] (regular)
	{Generate\\[0.3mm] 
	$\bb{\pi}_{\mathrm{reg}}(t)$};
	
	\node[boxs, right=of regular] (discretize)
	{Discretize $\bb{\pi}_{\mathrm{reg}}(t)$,\\[0.3mm]
	compute $GBur$};
	
	\node[boxs, right=of discretize] (deb)
	{Check\\[0.3mm] 
	$GBur\in\mathrm{DEB}$};
	
	\node[boxl, fill=red!20, right=of deb] (emg)
	{Is $\pi_{\mathrm{emg}}(t)$\\[0.3mm] 
	computed from\\[0.3mm] $\bb{q}_{\mathrm{new}}\in\mathrm{DEB}$?};
	
	\node[boxl, fill=green!20, right=4.7mm of emg] (safe)
	{Form $\bb{\pi}_{\mathrm{safe}}(t)=$\\[0.3mm]
	$\bb{\pi}[\bb{q}_{\mathrm{curr}},\bb{q}_{\mathrm{new}}]$ \\
	$\cup\, \bb{\pi}_{\mathrm{emg}}(t)$};
	
	\node[boxl, fill=gray!20, right=of safe] (exec)
	{Execute\\[0.3mm]
	$\bb{\pi}_{\mathrm{safe}}(t)$ until\\[0.3mm] 
	next iteration};
	
	\draw[line] (input) -- (target);
	\draw[line] (target) -- (regular);
	\draw[line] (regular) -- (discretize);
	\draw[line] (discretize) -- (deb);
	\draw[line] (deb) -- (emg);
	\draw[line] (emg) -- (safe);
	\draw[line] (safe) -- (exec);
	
	\draw[line] (emg.south) -- ++(0,-3mm) node[pos=1.2, above, font=\scriptsize] {~~~~no} -| (target.south);
	\draw[line] (exec.south) -- ++(0,-4.5mm) -| (input.south);
	
	\draw[line] (emg) -- node[midway, above, font=\scriptsize] {yes} (safe);
	
\end{tikzpicture}%
}
	\vspace{-0.1cm}
	\caption{Flowchart of the proposed pipeline. Gray boxes are executed by a dynamic planner, whereas the remaining boxes constitute CFS45.}
	\label{fig_flowchart}
	\vspace{-0.6cm}
\end{figure*}

After reaching the border of the bubble $\mathcal{B}_0$ (e.g., $\bb{q}_3$ in Fig. \ref{fig_computing_gbur} (b)), spines $\overline{\bb{q}_0 \bb{q}_k} \notin \mathcal{B}_0$ do not have to be in a collision (e.g., spines $\overline{\bb{q}_0 \bb{q}_4}, \dots, \overline{\bb{q}_0 \bb{q}_{10}}$ from Fig. \ref{fig_computing_gbur} (b)). Therefore, to further explore free space beyond $\mathcal{B}_0$, Alg. \ref{Pseudocode generalized bur} proposes the following:

\textbf{Step 1} (lines \ref{Alg4 computeBur}--\ref{Alg4 Q* empty check}): For the current root configuration $\bb{q}_k$, compute a bur $Bur_k$ by checking nodes from $\bb{\pi}[\bb{q}_k,\bb{q}_f]$, $\forall k\in\{m,\dots,N-1\}$, for some $m\in\{0,\dots,N-1\}$, for membership in $\mathcal{B}(\bb{q}_k, \bb{d}(\bb{q}_k))$ (e.g., black spines from $\bb{q}_0$ towards $\bb{q}_1,\bb{q}_2,\bb{q}_3 \in \mathcal{B}_0$, and from $\bb{q}_3$ towards $\bb{q}_4,\bb{q}_5,\bb{q}_6 \in \mathcal{B}_3$ in Fig. \ref{fig_computing_gbur} (b)). If $Bur_k$ exists, continue to subsequent steps. Otherwise, return all burs computed so far.

\textbf{Step 2} (lines \ref{Alg4 q_m}--\ref{Alg4 getTime}): Let $\bb{q}_m$ be the last node in the current bur $Bur_k$. Use $\bb{q}_m$ as the root for a new bubble $\mathcal{B}_m$ (e.g., $\bb{q}_3$ and $\bb{q}_6$ become the root of $\mathcal{B}_3$ and $\mathcal{B}_6$ in Fig. \ref{fig_computing_gbur} (b), respectively). 

\textbf{Step 3} (line \ref{Alg4 getDistancesToPlanes}): While computing distance to obstacles $d_{i,j}$, nearest points $R_{i,j}$ and $O_{i,j}$ between $i$-th robot's link and $j$-th obstacle $\mathcal{WO}_j$ can be obtained. They define a separating plane $P_{i,j}$ dividing free/occupied halfspaces as proven in \cite{lacevic2020gbur}. After the robot moves from $\bb{q}_k$ to $\bb{q}_m$, new underestimates of $d_{i,j}$ can be easily acquired as a distance to $P_{i,j}$ when the robot assumes $\bb{q}_m$ (e.g., $\bb{q}_3$ and $\bb{q}_6$ in Fig. \ref{fig_computing_gbur} (c, d)). All $\mathcal{P}_{i,j}$ are stored within a matrix $\bb{\mathcal{P}}$. 

\textbf{Step 4} (line \ref{Alg4 update Q}): Concatenate all spines from each bur $Bur_k$ into a single $GBur$. Repeat steps 1--4 until $\bb{q}_k = \bb{q}_f$ or no new bur can be formed (e.g. $Bur_6$ in Fig. \ref{fig_computing_gbur} (b)).

\vspace{-0.5cm}
\subsection{Safe Trajectories}
\label{Subsec. Safe Trajectories}
For DEs, conditional stopping-safety guarantees via $\mathrm{DEB}$s can be provided under an upper bound on obstacle velocity $v_\mathrm{obs}$, as in \cite{covic2025realtime}. For $\mathrm{DEB}$ construction, each plane $\mathcal{P}_{i,j}$ is conservatively propagated toward the $i$-th robot's link with the worst-case bounded speed $v_\mathrm{obs}$ (see Fig. \ref{fig_computing_gbur} (c)--(e)), which upper-bounds arbitrary obstacle motion rather than assuming a truly constant obstacle velocity. If the true obstacle speed exceeds \(v_\mathrm{obs}\), the formal $\mathrm{DEB}$-based guarantee becomes no longer valid. Thus, when checking whether $\bb{q}(t)\in \mathrm{DEB}$, the distance $d_c$ in \eqref{eq_bubble} is reduced by the plane's traversed path length $v_\mathrm{obs}(t-t_0)$.

Moreover, after reaching the configuration $\bb{q}_\mathrm{new}$, which represents the border of the last generated bubble, \textit{emergency (quartic) spline} $\bb{\pi}_\mathrm{emg}(t) = \bb{\pi}[\bb{q}_\mathrm{new}, \bb{q}_\mathrm{stop}]$ is computed. Thus, the robot can stop in a new configuration $\bb{q}_\mathrm{stop}$ as needed (e.g., the red spline from $\bb{q}_6$ in Fig. \ref{fig_computing_gbur} (b)). If $\bb{\pi}_\mathrm{emg}(t)$ is collision-free, a \textit{local safe trajectory}, defined as $\bb{\pi}_\mathrm{safe}(t) = \bb{\pi}[\bb{q}_\mathrm{curr}, \bb{q}_\mathrm{new}] \cup \bb{\pi}_\mathrm{emg}(t)$, can be followed from the current iteration (e.g., the green trajectory in Fig. \ref{fig_computing_gbur} (b)). Otherwise, the robot can execute emergency stopping from $\bb{q}_\mathrm{curr}$, which was computed at the planner's previous iteration (e.g., stopping at $\bb{q}_\mathrm{stop}'$ depicted by the dashed red line in Fig. \ref{fig_computing_gbur} (b)). 

To summarize, Fig. \ref{fig_flowchart} provides a compact flowchart of the proposed pipeline. Starting from the current robot state and the target waypoint, CFS45 first generates a nominal regular trajectory, extracts its collision-free portion using the $GBur$ construction, embeds it into a chain of $\mathrm{DEB}$s under bounded obstacle motion, and finally outputs a guaranteed safe trajectory augmented with an emergency stopping spline.

\vspace{-0.3cm}
\section{Simulation Study}
\label{Sec. Simulation Study}
This section provides an extensive simulation study\footnote{The implementation of CFS45 method in C++ is available online \href{https://github.com/robotics-ETF/RPMPLv2/tree/main/src/planners/trajectory}{here}. 
Moreover, the incorporation of Ruckig library can be found there in \texttt{TrajectoryRuckig} class. The simulation was performed using the laptop PC with Intel\textregistered\, Core\texttrademark\, i7-9750H CPU @ 2.60 GHz $\times$ 12 with 16 GB of RAM, with the code compiled to run on a single core of the CPU without any GPU acceleration.} with two distinct goals. The first goal is to measure the execution time required for generating ``random'' trajectories during the randomized trial scenarios. Another goal is to conduct a benchmark where we compare our method to the state-of-the-art Ruckig method \cite{berscheid2021ruckig} within two sampling-based planning approaches tailored for DEs. This competitor is selected primarily due to its computational efficiency, code availability, and the additional relevant features from Tab. \ref{tab_sota_summary}.

For planning we employ two approaches: a simple online version of RRT-like algorithm (ORRT), and a more recent DRGBT algorithm \cite{covic2025realtime} which is specifically dedicated to DEs. ORRT tries to extend from $\bb{q}_\mathrm{curr}$ toward $\bb{q}_\mathrm{target}$. Extensions are achieved only when the straight-line connection $\overline{\bb{q}_\mathrm{curr} \bb{q}_\mathrm{target}}$ is determined to be collision-free, where $\bb{q}_\mathrm{target}$ can be equal to $\bb{q}_\mathrm{goal}$ or a randomly sampled configuration $\bb{q}_\mathrm{rand}$. DRGBT is a real-time sampling-based motion planning algorithm for DEs that uses an \textit{adaptive horizon} of prospective target nodes along a preplanned $\mathcal{C}$-space path. By assigning node weights based on relative distances to obstacles and environmental changes, it continuously evaluates whether replanning is required.



\begin{figure}[t]
	\centering
	\includegraphics[width=0.7\linewidth]{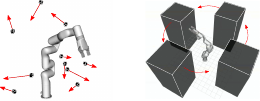}
	\vspace{-0.1cm}
	\caption{Two used scenarios: Scenario 1 (left) -- ten small random obstacles, and Scenario 2 (right) -- four large predefined obstacles. Motion of obstacles is depicted by the red velocity vectors.}
	\label{fig_scenarios}
\end{figure}

\begin{figure*}[t]
	\centering
	\includegraphics[width=0.9\linewidth]{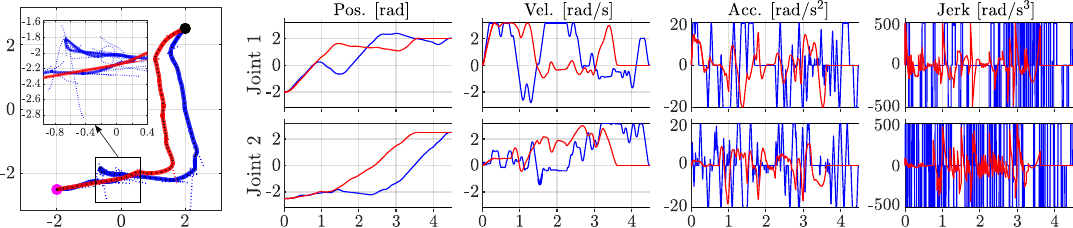}
	\vspace{-0.15cm}
	\caption{An example of generated trajectories (in $\{q_1, q_2\}$-plane on the left and $\mathbb{P}$, $\mathbb{V}$, $\mathbb{A}$, and $\mathbb{J}$ functions vs time (abscissa in [$\mathrm{s}$]) on the right hand side) by both {\color{red}CFS45} and {\color{blue}Ruckig} approaches for a planar 2-DoF robotic manipulator. Black lines depict minimum/maximum values.
	}
	\label{fig_example_traj}
	\vspace{-0.7cm}
\end{figure*}

\vspace{-0.3cm}
\subsection{Scenario Setup}
\label{Subsec. Scenario Setup}
\vspace{-0.05cm}
The simulation study deals with 19 scenario types, with each type using the planner's iteration time $T\in \{1,2,3,\dots,10,20,30,\dots,100\}\, [\mathrm{ms}]$. This time determines the frequency of generating a new trajectory from $\bb{q}_\mathrm{curr}$ toward a (possibly) new $\bb{q}_\mathrm{target}$. Fig. \ref{fig_scenarios} illustrates two scenarios used in the simulation study. The first one consists of ten random obstacles, where each one is assigned a random velocity with its magnitude limited to $1.6\,\mathrm{[\frac{m}{s}]}$. The second scenario uses four large obstacles moving at a random speed up to $0.3\,\mathrm{[\frac{m}{s}]}$.

The model of the UFactory xArm6 robot is exposed to 1000 different simulation runs with randomly generated circumstances (i.e., 1000 random but collision-free start and goal configurations). We opted to use quintic splines ($m=5$) within our approach since the real xArm6 manipulator has to meet constraints $\mathcal{K}$ on maximal joint $\mathbb{V}$, $\mathbb{A}$, and $\mathbb{J}$. These values are: $\bb{\omega}_{max} = \bb{\pi}_{n\times 1}\,\mathrm{[\frac{rad}{s}]}$, $\bb{\alpha}_{max} = \bb{20}_{n\times 1}\,\mathrm{[\frac{rad}{s^2}]}$, and $\bb{j}_{max} = \bb{500}_{n\times 1}\,\mathrm{[\frac{rad}{s^3}]}$, respectively, taken from the xArm6 datasheet. For the sake of completeness, we carry out simulations for both types of trajectories, as defined in Sec. \ref{Sec. Trajectory Collision Checking}.

\vspace{-0.3cm}
\subsection{Comparison with Ruckig}
Fig. \ref{fig_example_traj} shows an example of generated trajectories by both CFS45 and Ruckig approaches for a planar 2-DoF manipulator which is guided by DRGBT algorithm with $T = 50\, [\mathrm{ms}]$. Black dots designate points in each planner's iteration, while dotted red/blue curves depict generated local trajectories in each iteration. The accompanying \href{https://youtu.be/Qwsu-ytHNy8}{video} shows more scenarios operating in real time for regular and safe trajectories generated by both Ruckig and the proposed algorithm.

Fig. \ref{fig_example_traj} also illustrates $\mathbb{P}$, $\mathbb{V}$, $\mathbb{A}$, and $\mathbb{J}$ responses. Clearly, both methods satisfy all kinematic constraints and successfully reach the goal. In terms of smoothness (captured via jerk L1-norm), the proposed approach displays more desirable behavior with the average of 3.3 times improvement. 
Moreover, for the conducted scenarios, the proposed approach shows on average 2.33 times lower Frechet distance \cite{alt1995computing}, which is computed in each iteration as a ``distance'' between the line segment $\overline{\bb{q}_\mathrm{curr} \bb{q}_\mathrm{target}}$ and the geometric path resulting from the computed trajectory $\bb{\pi}[\bb{q}_\mathrm{curr}, \bb{q}_\mathrm{target}]$. An additional comparison with Ruckig based on the cumulative weighted squared norm of the joint velocity vector yielded comparable results and was omitted due to space limitations.

\begin{figure}[t]
	\centering
	\includegraphics[width=0.9\linewidth]{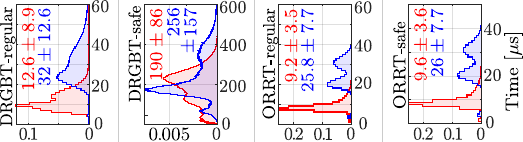}
	\vspace{-0.1cm}
	\caption{PDFs constructed for execution times of generating trajectories using {\color{red}CFS45} and {\color{blue}Ruckig} methods. Mean and standard deviation times are indicated in the corresponding subfigures.}
	\label{fig_histogram}
\end{figure}

Fig. \ref{fig_histogram} depicts probability density functions (PDFs) of execution times required for generating trajectories using CFS45 and Ruckig methods within four dynamic-planning variants: DRGBT-regular/safe and ORRT-regular/safe. Clearly, the proposed method completes almost 3 times faster than Ruckig. It is worth mentioning that each histogram captures more than 24 million different ``random'' trajectories. The computational breakdown analysis has shown that CFS45 accounts for at most $0.1\,[\%]$ and $2.8\,[\%]$ of the considered DRGBT average runtime for regular and safe trajectory generation respectively, indicating that it is not a  computational bottleneck. The case of safe trajectory is more involved, as it augments regular splines with $GBur$ construction, $\mathrm{DEB}$-based certification, and emergency-stopping computation.

Finally, Fig. \ref{fig_results_criteria} reveals the performance of DRGBT-regular/safe and ORRT-regular/safe methods when using CFS45 and Ruckig approaches as trajectory generator. The performance is defined by the following criteria used for the comparison: (adjusted) success rate, algorithm time (required time for the robot to reach the goal), and path length from the start to the goal. Unlike the standard binary success metric (1 if the goal is reached, and 0 otherwise), the \textit{adjusted success} is measured as a proxy for each run using the real number 
$1-\frac{\|\bb{q}_\mathrm{end} - \bb{q}_\mathrm{goal}\|}{\|\bb{q}_\mathrm{start} - \bb{q}_\mathrm{goal}\|} \in [0,1]$, 
where $\bb{q}_\mathrm{end}$ represents an end configuration (in case of collision, it is a configuration at which the collision occurred, otherwise $\bb{q}_\mathrm{end} = \bb{q}_\mathrm{goal}$). After averaging all adjusted successes for each run, we obtain the \textit{adjusted success rate}. Clearly, Fig. \ref{fig_results_criteria} shows that CFS45 outperforms Ruckig according to all criteria. Particularly, considerable performance improvement is achieved at higher planner frequencies (above $100\, \mathrm{[Hz]}$, i.e., for $T \leq 10\, \mathrm{[ms]}$). It is worth stressing that we also conducted an additional comparison with TrajOpt \cite{schulman2014motion}, but its available implementation turned out to be less suitable for the considered real-time setting with frequent target-state updates, where the proposed method showed clear advantages.

\begin{figure}[t]
	\centering
	\includegraphics[width=0.94\linewidth]{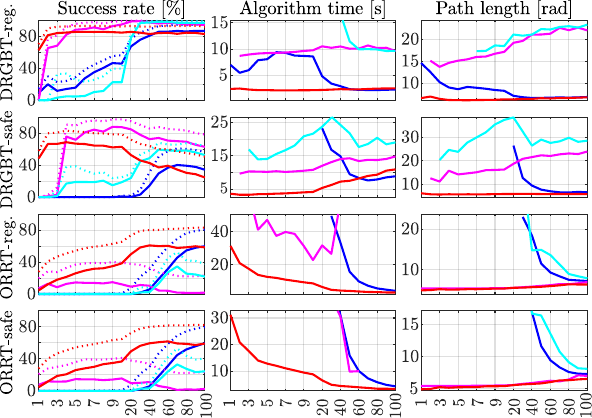}
	\vspace{-0.15cm}
	\caption{The performance of the used dynamic planners vs their iteration time $T$ (abscissa in $\mathrm{[ms]}$) in cases: {\color{red}CFS45 for Scenario 1} and {\color{blue}Ruckig for Scenario 1};  {\color{magenta}CFS45 for Scenario 2} and
		{\color{cyan}Ruckig for Scenario 2}. Dashed line depicts adjusted success rate. Algorithm time and path length are not shown in cases when success rate is less than $10\,\mathrm{[\%]}$ or in cases when they are greater than the shown plot limits.}
	\label{fig_results_criteria}
\end{figure}

\vspace{-0.3cm}
\section{Experimental Validation}
\label{Sec. Experimental Validation}
\vspace{-0.05cm}
To validate the proposed CFS45 method in both static and dynamic environments (i.e., with human presence), we conduct six experiments on the real UFACTORY xArm6 manipulator. Environment sensing is performed using two \textit{Intel RealSense D435i} depth cameras operating at $f_{perc} = 20\,\mathrm{[Hz]}$. The perception pipeline fuses the left and right point clouds into a unified cloud to extract obstacles as axis-aligned bounding boxes. For collision and distance queries, the robot links are approximated by bounding capsules. The low-level controller operates at $f_{con} = 500\,\mathrm{[Hz]}$, handling the desired and measured $\mathbb{P}$ and $\mathbb{V}$ for each joint. The planning algorithm runs in a ROS2 environment (see the implementation \href{https://github.com/robotics-ETF/xarm6-etf-lab}{here}). 

The planner outputs a desired trajectory, containing $\mathbb{P}$, $\mathbb{V}$, and $\mathbb{A}$ for each joint, with frequency $f_{alg} = 1/T$. We use $f_{alg} = 20\,\mathrm{[Hz]}$ to synchronize it with $f_{perc}$. Since $f_{alg}$ generally differs from $f_{con}$, the obtained trajectory is sampled at $f_{con}$ and passed to the controller to achieve smooth motion.

The accompanying \href{https://youtu.be/Qwsu-ytHNy8}{video} shows successful execution of regular and safe trajectories for different jerk limits, e.g., $50\,\mathrm{[\frac{rad}{s^3}]}$ and $200\,\mathrm{[\frac{rad}{s^3}]}$. Representative snapshots are shown in Fig.~\ref{fig_exp_snapshots}, where the end-effector path is depicted by red lines. The measured $\mathbb{P}$ and $\mathbb{V}$ for each joint are given in the same video.
Clearly, all velocities remain within the set limit of $1.5\,\mathrm{[\frac{rad}{s}]}$ for each joint. Unfortunately, the used robot does not provide joint acceleration measurements. 

\begin{figure}[t]
	\centering
	\includegraphics[width=0.95\linewidth]{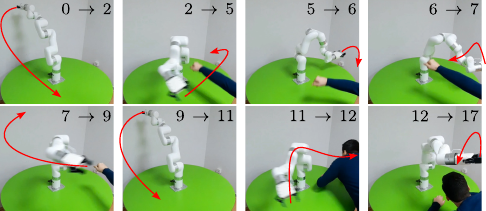}
	\vspace{-0.1cm}
	\caption{Snapshots from the real-world scenario. Time transition in $\mathrm{[s]}$ is depicted in each snapshot.}
	\label{fig_exp_snapshots}
\end{figure}


\vspace{-0.3cm}
\section{Conclusion}
\label{Sec. Conclusion}
We presented an online method for converting arbitrary geometric paths into jerk-limited, kinematically-feasible trajectories using sequences of quintic/quartic splines. The approach accounts for all the kinematic constraints up to limited jerks. The procedure interpolates safe trajectories and provides safety guarantees in both static and dynamic environments. 

Moreover, a comprehensive simulation comparison against the state-of-the-art approach demonstrates significant improvement in performance considering trajectory smoothness, computational efficiency, and dynamic planner's frequency. Experimental validation confirms the proposed approach is suitable for real-world dynamic environments, including experiments in which a moving person is treated conservatively as an external obstacle by the perception and planning pipeline.

Future work will include validation of the novel method on different types of robots (e.g., mobile robots). Moreover, robots with many DoFs will be used to inspect how the proposed approach scales with the increased dimensionality.

\vspace{-0.3cm}


\bibliographystyle{IEEEtran}
\bibliography{../References} 

\end{document}